\newcommand\inputpgf[2]{{
\let\pgfimageWithoutPath\pgfimage
\renewcommand{\pgfimage}[2][]{\pgfimageWithoutPath[##1]{#1/##2}}
\input{#1/#2}
}}
\def\set@curr@file#1{\def\@curr@file{#1}} 
\newcommand{\gradat}[1]{{\nabla_\vtheta U(\vtheta #1)}}
\title{Exact Langevin Dynamics with Stochastic Gradients}
\author{%
  \Name{Adri\`a Garriga-Alonso}
  \Email{ag919@cam.ac.uk} \\
  \addr University of Cambridge, United Kingdom\\
  \Name{Vincent Fortuin}
  \Email{fortuin@inf.ethz.ch} \\
  \addr ETH Z\"urich, Switzerland}
\newcommand{\mhaccept}[2]{{\text{MH}\left( #1 \mvbar #2 \right)}}
\newcommand{\Ptarg}[1]{\tilde{\pi}\bra{#1}}
\newcommand{\Ptrans}[2]{g\bra{#1 \mvbar #2 }}
\newcommand{\Pboltzmann}[1]{\bar{P}\bra{#1}}
\begin{document}

\maketitle

\begin{abstract}

  Stochastic gradient Markov Chain Monte Carlo algorithms are popular samplers for approximate inference, but they are generally biased. We show that many recent versions of these methods (e.g. \cite{sghmc}) cannot be corrected using Metropolis-Hastings rejection sampling, because their acceptance probability is always zero. We can fix this by employing a sampler with realizable backwards trajectories, such as Gradient-Guided Monte Carlo \citep{horowitz1991}, which generalizes stochastic gradient Langevin dynamics \citep{sgld} and Hamiltonian Monte Carlo. We show that this sampler can be used with stochastic gradients, yielding nonzero acceptance probabilities, which can be computed even across multiple steps.
\end{abstract}

\section{Introduction}
\label{sec:intro}

Exact posterior inference in many Bayesian models is intractable, which has led to the development of different approximate inference schemes.
One popular class of such algorithms are Markov Chain Monte Carlo (MCMC) methods. These simulate a Markov chain, constructed so that its stationary distribution is the true posterior \citep{geyer1992practical}.
Among the MCMC methods, Hamiltonian Monte Carlo (HMC) has achieved recognition as being the ``gold standard'', mostly for its favorable convergence properties \citep{neal2012handbook}.

While HMC yields great performance, its adoption in many real applications is hindered by the fact that it requires passes over the whole data set to compute one gradient step, which is prohibitively expensive in many settings \citep{sghmc}.
Especially for large-data Bayesian models such as Bayesian neural networks (BNNs), stochastic gradient MCMC (SG-MCMC) methods are more popular, due to their favorable computational costs and their similarity to stochastic gradient descent (SGD).
Arguably, the most popular SG-MCMC method for BNNs in recent years has been stochastic gradient HMC \citep[SGHMC]{sghmc}, although its predecessor stochastic gradient Langevin dynamics \citep[SGLD]{sgld} also enjoys considerable following.

Since all of these approaches are approximate, in order to ensure that they sample from the true posterior, rejection sampling is often necessary, for instance, following the Metropolis-Hastings (MH) scheme \citep{metropolis1953equation, hastings1970monte}.
However, in this work, we draw attention to the unfortunate fact that SGHMC does not allow for MH rejection sampling, since its acceptance probability is always zero.
We then revisit gradient-guided Monte Carlo (GGMC) \citep{horowitz1991}, which can overcome this problem.
We show that GGMC provides a unified framework for SGLD and SGHMC and that it yields nonzero acceptance probabilities, which can even be computed across multiple steps.

Our contributions are as follows:
\begin{itemize}
\item We show that the SG-MCMC schemes used in recent work \citep{sghmc,wenzel20posterior} always have acceptance probability zero, because the backward trajectory is not realizable in the symplectic Euler-Maruyama integrator they employ (Sec.~\ref{sec:zero_probs}).
\item We revisit GGMC, a variant of HMC. It generalises HMC and SGLD, always has positive acceptance probability and, with correction, accepts stochastic gradients (Sec.~\ref{sec:GGMC}). The scheme was independently discovered in the statistics \citep{neal2012handbook,horowitz1991} and molecular dynamics literatures \citep{bussi-parrinello,lm-obabo}. The latter have shown that it simulates Langevin dynamics.
\item We examine an augmentation scheme to calculate the acceptance probabilities across multiple stochastic steps (\cref{sec:multi_step}, \cref{apd:multi_step}; \cite{amagold}). This allows us to use stochastic gradients, but sample from the exact posterior, by doing one MH correction using the exact likelihood after many stochastic gradient steps.
\end{itemize}

Based on our findings, we believe that work in Bayesian deep learning should discard the Euler-Maruyama scheme used by SGHMC and its variants \citep{sgnht,sghmc-complete-recipe}. Instead, to have confidence in our sampling, we should use reversible integrators \citep[this work]{nogin,adlala, amagold} and keep track of their acceptance probability.

\section{Background: Hamiltonian Monte Carlo}


We wish to sample from a target distribution $\pi\bra{\vtheta} \propto \Ptarg{\vtheta}$, which we can
only evaluate up to a normalizing constant. HMC first augments the parameter
space $\vtheta \in \mathbb{R}^d$ with a momentum $\vm\in\mathbb{R}^{d}$ of the
same dimension. It constructs a Markov chain that alternates between re-sampling the momentum and then
following an ordinary differential equation (ODE), specified by the Hamiltonian $H(\vtheta, \vm)$,
for a given amount of time.
The stationary distribution of this Markov chain (with exact simulation) is the Boltzmann distribution
\begin{equation}
  \Pboltzmann{\vtheta, \vm} \propto \exp\bra{-\frac{1}{T}H(\vtheta, \vm)} = \exp\bra{-\frac{1}{T} U(\vtheta)} \exp\bra{-\frac{1}{T}K(\vm)}.
  \label{eq:boltzmann}
\end{equation}
Here, $U(\vtheta)$ is the potential energy, which we relate to the unnormalised target distribution by $U(\vtheta) = -\log \Ptarg{\vtheta}$.
The kinetic energy $K(\vm) = \frac{1}{2}\vm^{\tp}\mM^{{-1}}\vm$  makes the momenta $\vm$ Gaussian, and thus easy to sample.
$\vM$ is a positive
definite (usually diagonal) preconditioner, the mass matrix.
 $T \in \mathbb{R}^{+}$ is a positive temperature. For HMC, $T=1$, but we will relax this constraint in the next section. By construction, the tempered target $\Ptarg{\vtheta}^{1/T}$ is a marginal of the Boltzmann distribution.
We may write the ODE's full state as $\vs = (\vtheta, \vm)$.

Unfortunately, we cannot exactly simulate the ODE for most posteriors, so we must discretise it with an integrator. This unavoidably introduces errors into the simulation, which have to be controlled if we want to guarantee that the sampling is accurate.

The standard way to control these errors is via the Metropolis-Hastings acceptance probability. For this, we define the transition kernel: $\Ptrans{\vs_*}{\vs_n} = 1$ if the approximate simulation starting at $\vs_n=(\vtheta_n, \vm_n)$ ends at $\vs_*=(\vtheta_*, \vm_*)$, otherwise 0.

Then, we accept the new sample (i.e., set $\vs_{n+1}=\vs_* $) with probability $\min\bra{1, \mhaccept{\vs_{*}}{\vs_{n}}}$, where
\begin{equation}
\mhaccept{\vs_{*}}{\vs_{n}} = \frac{\Pboltzmann{\vtheta_*, \vm_*}}{\Pboltzmann{\vtheta_n, \vm_n}} \frac{\Ptrans{\vtheta_n, -\vm_n}{\vtheta_*, -\vm_*}}{\Ptrans{\vtheta_*, \vm_*}{\vtheta_n, \vm_n}},
\label{eq:metropolis-hastings}
\end{equation}
otherwise we reject the sample, that is, set $\vs_{n+1}=\vs_{n}$. We negate the momentum so the integrator applied backwards would retrace the same steps.\footnote{Hamiltonian dynamics are time-reversible. We also need to use a time-reversible integrator, like the leapfrog or velocity Verlet, so the ratio of $g$ in \cref{eq:metropolis-hastings} is 1.}
 Afterwards, we are free to negate the momentum again without a rejection step, since $K(-\vm) = K(\vm)$, so it leaves the target distribution~(\ref{eq:boltzmann}) invariant \citep[eq.~5.20]{neal2012handbook}. For regular HMC, this can be ignored because the momentum gets resampled immediately.

\section{SGHMC acceptance probabilities are zero \label{sec:zero_probs}}

All the Markov chains we consider in this paper are discretisations of
Langevin's stochastic differential equations (SDEs) which, borrowing notation
from \citet{wenzel20posterior}, are
\begin{align}
  \dd\vtheta &= \vM^{{-1}} \vm \,\dd{t}
  \label{eq:langevin-parameter} \\
  \dd\vm &= -\gamma \vm \,\dd{t} - \gradat{}\,\dd{t} + \vM^{1/2}\sqrt{2\gamma T}\,\dd{\vW} .
  \label{eq:langevin-momentum}
\end{align}
Here, $\gamma \in \mathbb{R}^{+}$ is a friction coefficient and $\vW$ is a standard
Wiener process, with covariance $\mI$.

If $\gamma = 0$, these become Hamilton's equations, used for HMC.
Setting $\dd\vm=0$, we obtain the overdamped Langevin equation, which discretises
to the Metropolis-adjusted Langevin algorithm (MALA; e.g., \cite{roberts1998mala}).

For time step $h$, the symplectic Euler-Maruyama scheme in \citet[SGHMC]{sghmc} and \citet{wenzel20posterior} simulates these SDEs as follows. Using noise $\vepsilon \sim \Normal{{\boldsymbol 0}, \mI}$,
\begin{align}
  \vm_{n+1} &= (1 - h\gamma) \vm_{n} - h\gradat{_{n}} + \sqrt{h}\mM^{1/2}\sqrt{2\gamma T} \vepsilon
                  \label{eq:symplectic-euler-momentum} \\
  \vtheta_{n+1} &= \vtheta_{n} + h\mM^{{-1}} \vm_{n+1} \;.
                  \label{eq:symplectic-euler-parameter}
\end{align}
Like in HMC, it is the discretisation, not the SDE, that determines our simulation's Markov transition kernel $g$. Unfortunately, we will see that the backward transition is unrealizable, so its density $\Ptrans{\vtheta_{n}, -\vm_{n}}{\vtheta_{n+1}, -\vm_{n+1}}$ is almost always zero. Because this density appears in the numerator of \cref{eq:metropolis-hastings}, the resulting acceptance probability for the Euler-Maruyama scheme is zero as well.

\begin{theorem}
  For any starting point $(\vtheta_{n},\vm_{n})$ and step size $h > 0$, we sample
  $(\vtheta_{n+1},\vm_{n+1})$ using
  equations~(\ref{eq:symplectic-euler-momentum},~\ref{eq:symplectic-euler-parameter}).
  Then, the backward transition density
  $\Ptrans{\vtheta_{n}, -\vm_{n}}{\vtheta_{n+1}, -\vm_{n+1}}$ is zero, with probability 1 over the randomness of the forward transition.
  \label{thm:zero-density}
\end{theorem}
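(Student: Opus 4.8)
The plan is to exploit the asymmetry of the integrator: \cref{eq:symplectic-euler-momentum} injects Gaussian noise into the momentum, but the position update in \cref{eq:symplectic-euler-parameter} is \emph{deterministic} given the updated momentum. Hence, for any fixed start $\vs = (\vtheta, \vm)$, the forward kernel $\Ptrans{\cdot}{\vs}$ is concentrated on the $d$-dimensional affine subspace
\[
\mathcal{A}_{\vtheta} = \{(\vtheta', \vm') : \vtheta' = \vtheta + h\mM^{-1}\vm'\},
\]
parametrised by the Gaussian $\vm'$. This set has Lebesgue measure zero in $\mathbb{R}^{2d}$, so $g$ is a singular measure; I would make this precise by factoring the kernel as a Dirac constraint on position times a nondegenerate Gaussian on momentum, $\Ptrans{\vtheta', \vm'}{\vtheta, \vm} = \delta\bra{\vtheta' - \vtheta - h\mM^{-1}\vm'}\, \rho(\vm' \mvbar \vtheta, \vm)$, where $\rho$ is the density of $(1 - h\gamma)\vm - h\gradat{} + \sqrt{h}\mM^{1/2}\sqrt{2\gamma T}\vepsilon$. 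This density is nondegenerate precisely because $\gamma, T, h > 0$ and $\mM$ is positive definite.

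Next I would evaluate the backward density $\Ptrans{\vtheta_n, -\vm_n}{\vtheta_{n+1}, -\vm_{n+1}}$ by running the same kernel from the reversed endpoint $(\vtheta_{n+1}, -\vm_{n+1})$ and reading off its value at $(\vtheta_n, -\vm_n)$. The Dirac factor is nonzero only if $(\vtheta_n, -\vm_n) \in \mathcal{A}_{\vtheta_{n+1}}$, i.e.\ only if $\vtheta_n - \vtheta_{n+1} = -h\mM^{-1}\vm_n$. Substituting the forward position update $\vtheta_{n+1} = \vtheta_n + h\mM^{-1}\vm_{n+1}$, which gives $\vtheta_n - \vtheta_{n+1} = -h\mM^{-1}\vm_{n+1}$, collapses this constraint to $h\mM^{-1}(\vm_n - \vm_{n+1}) = 0$, i.e.\ $\vm_n = \vm_{n+1}$, since $h > 0$ and $\mM$ is invertible.

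The final step is to show that $\vm_n = \vm_{n+1}$ is a null event over the forward noise. From \cref{eq:symplectic-euler-momentum}, $\vm_{n+1} = \vm_n$ holds if and only if $\sqrt{h}\mM^{1/2}\sqrt{2\gamma T}\,\vepsilon = h\gamma\vm_n + h\gradat{_n}$, which pins $\vepsilon$ to a single point of $\mathbb{R}^d$ (as $\mM^{1/2}$ is invertible and $\sqrt{2h\gamma T} > 0$). The continuous law of $\vepsilon$ assigns this point probability zero. Therefore, with probability one over the forward transition, the position constraint defining $\mathcal{A}_{\vtheta_{n+1}}$ fails, the target $(\vtheta_n, -\vm_n)$ lies off the support of the backward kernel, and the Dirac factor---hence the backward density---vanishes.

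The main obstacle is conceptual rather than computational: the forward kernel has no density with respect to Lebesgue measure on the full state space $\mathbb{R}^{2d}$, because position carries no independent noise, so one must pin down what the ``density'' $g$ in \cref{eq:metropolis-hastings} means. The cleanest route is the support argument itself---$\Ptrans{\cdot}{\vs}$ lives on the measure-zero slice $\mathcal{A}_{\vtheta}$, and the reversed target falls off the corresponding slice $\mathcal{A}_{\vtheta_{n+1}}$ almost surely---which avoids manipulating products of Dirac deltas. I would also be explicit that the conclusion genuinely uses $\gamma, T > 0$: these make the momentum noise nondegenerate, which is exactly what turns the coincidence $\vm_{n+1} = \vm_n$ into a null event; the statement should be read in this friction-driven regime, where there is randomness to take probabilities over in the first place.
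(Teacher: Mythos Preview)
Your proposal is correct and follows essentially the same approach as the paper: identify the deterministic position constraint $\vtheta_* = \vtheta + h\mM^{-1}\vm_*$ that defines the support of $g$, show that the forward and backward constraints together force $\vm_n = \vm_{n+1}$, and observe that this pins the Gaussian noise $\vepsilon$ to a single point and hence has probability zero. Your explicit Dirac-times-Gaussian factorisation and your remark that $\gamma, T > 0$ are needed for the momentum noise to be nondegenerate are slightly more careful than the paper's own presentation, but the argument is the same.
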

\begin{proof}
Consider the transition distribution $\Ptrans{\vtheta_*, \vm_*}{\vtheta, \vm}$. Because it has been generated from equations~(\ref{eq:symplectic-euler-momentum},~\ref{eq:symplectic-euler-parameter}), any realization from $g$ must satisfy them. In particular, it has to satisfy \cref{eq:symplectic-euler-parameter}, that is, $\vtheta_{*} = \vtheta + h\mM^{-1}\vm_{*}$. Thus,
 for the purposes of the acceptance probability (eq.~\ref{eq:metropolis-hastings}), we can define the density of $g$ as zero outside of its support,
  \begin{equation}
    \Ptrans{\vtheta_{*}, \vm_{*}}{\vtheta,\vm} = \begin{cases}
      \Normal{\vm_{*} \mvbar (1 - h\gamma)\vm - h\gradat{}, h\mM2\gamma T}
      &\text{if } \vtheta_{*} = \vtheta + h\mM^{-1}\vm_{*} \\
     0 & \text{otherwise}.\end{cases}
   \label{eq:transition-density}
 \end{equation}

  Now, let us check whether this condition can simultaneously hold for the  forward and backward transitions in \cref{eq:metropolis-hastings}. That is, is it possible that
  \begin{equation}
      \vtheta_{n+1} = \vtheta_n + h\mM^{-1}\vm_{n+1}
      \quad\text{and}\quad 
      \vtheta_{n} = \vtheta_{n+1} + h\mM^{-1}(-\vm_{n}) \; \text{?}
      \label{eq:forward-and-backward-conditions}
  \end{equation}
  Solving the linear system we can see that, since $\mM$ is positive definite
  and $h>0$, \cref{eq:forward-and-backward-conditions} is true if and only if
  $\vm_n = \vm_{n+1}$. Since the step size is $h > 0$, this only happens when
  the Gaussian random draw $\vepsilon$ in \cref{eq:symplectic-euler-momentum} is
  equal to
  $\vv = \bra{h\mM 2\gamma T}^{-1/2} \bra{ h\gamma \vm_n + h\gradat{_n}}$.
  Because the mean of $\vepsilon$ is zero (and thus $\ne \vv$) and its
  distribution is Gaussian, this is a probability zero event.\footnote{This also holds for any non-singular distributions on $\vepsilon$. It is also undesirable, because the momentum is supposed to change over time when following Langevin dynamics.}

Thus, with probability 1, $\vepsilon \ne \vv$, which implies $\vm_n \ne \vm_{n+1}$. We know the left-hand-side expression of \cref{eq:forward-and-backward-conditions} is true, so that implies $\vtheta_n \ne \vtheta_{n+1} + h\mM^{-1}(-\vm_n)$, which implies by \cref{eq:transition-density} that $\Ptrans{\vtheta_{n}, -\vm_{n}}{\vtheta_{n+1}, -\vm_{n+1}} = 0$.  
\end{proof}


The easiest way to ensure $\vm_{n} = \vm_{n+1}$, such that the acceptance probability
remains nonzero, is to set the time step $h=0$. This, however, makes the sampler
useless. If we take the limit $h \to 0$ instead, the Euler-Maruyama scheme gets
arbitrarily close to the true SDE trajectory. Crucially however, per \cref{thm:zero-density}, the acceptance probability
remains $0$ for any $h>0$. Thus, it is impossible to use the acceptance probability to monitor the discretisation error. The Euler-Maruyama scheme \emph{cannot} satisfy detailed balance. So far, we have not considered stochastic gradients, but this result includes them: it still holds if we substitute an arbitrary $\vg_{n}$ for $\gradat{_n}$.


\section{GGMC solves the problem by using a different integrator
\label{sec:GGMC}}

To get a nonzero acceptance probability, we can just use a backward-realizable integrator. The one we present here has been discovered several times in statistics \citep{horowitz1991, neal1993report} and molecular dynamics \citep{bussi-parrinello,lm-obabo}. We call it Gradient-guided Monte Carlo (GGMC), following \citet{horowitz1991}, because it operates similarly to random walk Metropolis-Hastings on the momentum, but with a parameter gradient influencing the random walk's direction.
\citet[sec.~5.1]{neal1993report} calls it the ``stochastic dynamics'' method, or ``Langevin Monte Carlo (LMC) with partial momentum refreshment'' \citep{neal2012handbook}.
\citet{lm-obabo} call this integrator ``OBABO'' because of the order of its steps.

The integrator splits Langevin dynamics
(eqs.~\ref{eq:langevin-momentum}--\ref{eq:langevin-parameter}) into three parts,
which can be solved exactly, and applies them in sequence \citep{lm-obabo}. Let $a = e^{-\gamma h}$, and $\vepsilon,\vepsilon' \sim \Normal{{\boldsymbol 0}, \mI}$ be Gaussian random variables with identity covariance. Then
\begin{align}
  \tag{O.1} \label{eq:O.1}
  \vm_{n+{1/4}} &= \sqrt{a} \vm_{n} + \sqrt{(1 - a) T} \mM^{1/2}\vepsilon, \\
  \tag{B.1}\label{eq:B.1}
  \vm_{n+{1/2}} &= \vm_{n+{1/4}} - \frac{h}{2}\gradat{_{n}}, \\
  \tag{A}\label{eq:A}
  \vtheta_{n+1} &= \vtheta_{n} + h\mM^{{-1}}\vm_{n+{1/2}}, \\
  \tag{B.2}\label{eq:B.2}
  \vm_{n+{3/4}} &= \vm_{n+{1/2}} - \frac{h}{2}\gradat{_{n+1}}, \\
  \tag{O.2}\label{eq:O.2}
  \vm_{n+1} &= \sqrt{a} \vm_{n+{3/4}} + \sqrt{(1 - a) T} \mM^{{1/2}} \vepsilon'.
\end{align}
Steps~\ref{eq:O.1}~and~\ref{eq:O.2} are partial momentum refreshment, which is exact; and steps \ref{eq:B.1}, \ref{eq:A} and \ref{eq:B.2} form one leapfrog step. Thus, the discretisation error is locally $O(h^3)$ \citep{neal2012handbook}. Establishing the trajectory discretisation error is complicated, because it requires solving a partial differential equation (PDE). However, all found PDE solutions indicate that the error is $O(h^{2})$ \citep{lm-obabo,scc-time-rescaling}, the same as HMC. 

This scheme generalizes several popular samplers.
To obtain HMC, we may repeat BAB many times, and use O as momentum refreshment. Alternatively,
setting $\gamma = 0$ implies $a=1$, so the O steps disappear, and the integrator becomes one leapfrog step. Adding a separate momentum refreshment then also yields HMC. To obtain MALA\footnote{\citet{lm-obabo} recommend the BAOAB ordering for more accurate sampling in MALA.} or SGLD \citep{sgld}, we take $\gamma \to \infty$. This implies $a \to 0$, which means that step~\ref{eq:O.1} entirely ignores the incoming momentum $\vm_n$ and just samples a fresh one. Because of this, we do not need to calculate the next momentum $\vm_{n+1}$ at the end of a step, or represent the momentum at all. Thus, steps \ref{eq:B.2} and \ref{eq:O.2} fall out. We may also fold steps~\ref{eq:O.1} and \ref{eq:B.1} into \ref{eq:A} to obtain the more common single-step expression for SGLD.

Note that there are other modern methods that simulate Langevin dynamics while preserving reversibility. For instance, \citet{nogin} propose a variant of ABOBA that corrects for the (known or estimated) covariance of the gradient noise, similar to SGHMC, to get the correct $O(h^2)$ statistics. Also, \citet{amagold} develop an integrator that is asymmetric, but nevertheless realizable in the reverse direction, and thus has a nonzero acceptance probability.

\paragraph{Acceptance probability.}
The key ingredient is the ratio of backward and
forward transition probabilities, which we derive in \cref{apd:obabo-accept}.
The resulting acceptance probability is
\begin{equation}
 \log \mhaccept{\vtheta_{n+1}, \vm_{n+1}}{\vtheta_{n}, \vm_{n}} = -\frac{1}{T}\bra{U(\vtheta_{n+1}) - U(\vtheta_{n}) + K(\vm_{n+3/4}) - K(\vm_{n+1/4})}.
\label{eq:ggmc-accept}
\end{equation}
This is the same expression as in HMC or Langevin MC, were we to regard only the inner
Leapfrog step as the transition to accept or reject.
Unlike Hamiltonian dynamics ($\gamma = 0$), underdamped Langevin dynamics ($0 < \gamma < \infty$) are not in general time-reversible, even when simulated exactly \citep{sghmc-complete-recipe}.\footnote{They are still backwards-\emph{realizable}, that is, the backwards trajectory is in the support of the transition kernel.} However, applying a
Metropolis-Hastings correction step forces the resulting Markov chain to be reversible.
It is interesting, then, that the acceptance probability depends only on the
accuracy of the reversible steps~BAB, and not
at all on the friction $\gamma$, which controls how much the momentum is refreshed and thus how irreversible the Markov chain is.

\paragraph{Empirical properties.} In Fig.~\ref{fig:cifar10}, we compare GGMC and HMC for different learning rates. GGMC has similar potential and kinetic energy for all chains. HMC has higher acceptance probability than GGMC (likely because the underlying chain in GGMC is irreversible). The behaviour of HMC varies from chain to chain, suggesting that it is unstable for larger $\ell$.
The only learning rate for which the acceptance probability is close to 1 is $10^{-6}$, which is much lower than is used in typical practice (see Fig.~\ref{fig:cifar10-full} in the appendix).

Overall, the qualitative behaviour of GGMC seems similar to SGHMC, but with the advantage that its accuracy can be monitored or corrected using the M-H acceptance probability. See \cref{apd:experiments} for more experiments and details.

\begin{figure}[ht]
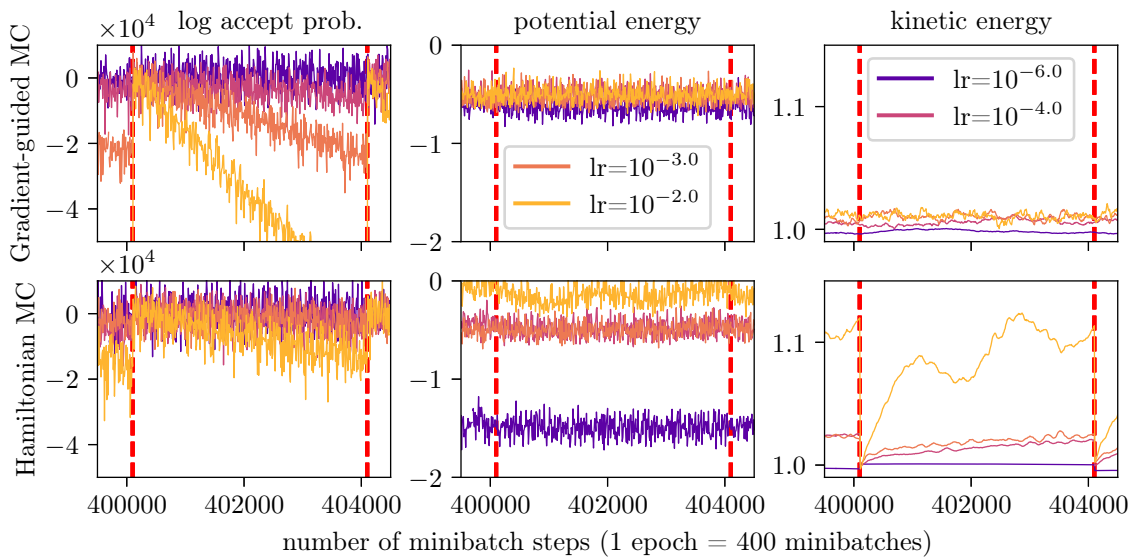

  \centering
  \inputpgf{figures/}{accept_prob_googleresnet_cifar10.pgf}
  \vspace{-1.2ex}
  \caption{Comparison of different learning rates ($\ell\sim h$, see \cref{apd:parameter-relationship}),
    for GGMC and HMC applied to a ResNet-20 in the CIFAR-10 data set; for epochs numbered 1000--1010. Details and full training run in \cref{apd:experiments}. \label{fig:cifar10}}
\end{figure}

\section{Acceptance probabilities for multiple and stochastic steps}\label{sec:multi_step}
Already \citet{duane-hmc} observed that the Hamiltonian used to specify the transition probability $g$ need not be the one used to calculate the acceptance probability. Only the latter will determine what distribution the sampler converges to. That is, the sampler (HMC, or GGMC) may use stochastic or incorrect gradients, so long as the MH acceptance probability is calculated with respect to the target $\Ptarg{\vtheta}^{1/T}e^{-K(\vm)/T}$.

Unfortunately, calculating the exact acceptance probability after every GGMC step defeats the point of stochastic gradients. Instead, we can defer the acceptance probability calculation by many steps. This allows us to, for example, follow 10 epochs of stochastic gradients, and then spend one epoch calculating the acceptance probability. The resulting scheme multiplies the acceptance probabilities of each step together \citep{amagold}. Starting at $\vs_n$, after $N$ steps of MCMC, we can accept the resulting sample $\vs_{n+N}$ with probability $\min\bra{1, \mhaccept{\vs_{n+N}}{\vs_n}}$, where
\begin{equation}
\mhaccept{\vs_{n+N}}{\vs_n} = \prod_{i=1}^N \mhaccept{\vs_{n+i}}{\vs_{n+i-1}} = \frac{\Pboltzmann{\vs_{n+N}}}{\Pboltzmann{\vs_{n}}} \prod_{i=1}^N \frac{\Ptrans{\vs_{n+i}}{\vs_{n+i-1}}}{\Ptrans{\vs_{n+i-1}}{\vs_{n+i}}}.
\label{eq:multi-step-mh}
\end{equation}
Due to telescopic cancellation, we need only evaluate the exact likelihood at the end of many steps. This can be combined with other algorithms \citep[sec.~5.4]{neal2012handbook,sohl-dickstein-hmc} to reduce the rejection probability.
\citet[Lemma~2]{amagold} propose this deferred MH scheme and show its correctness for any sampler with a homogeneous Markov chain.
We extend it to symmetrically inhomogeneous Markov chains in \cref{apd:multi_step}. Our extension gives zero acceptance probability to asymmetric chains like the popular cosine learning rate schedule \citep{cosine-schedule}.

\section{Conclusion}

We have shown that SGHMC and similar popular samplers cannot be corrected using rejection, since their acceptance probabilities are zero.
We have then revisited GGMC, which solves this problem and generalizes SGLD and HMC.
Finally, we have also presented a way to compute the acceptance probabilities of GGMC across several steps, leading to a method for drawing exact posterior samples using only stochastic gradients.

It is time for the Bayesian deep learning community to let go of the Euler-Maruyama scheme. Instead, we should explore the breadth of integrators for Langevin dynamics which we have inherited from molecular dynamics and statistics. We have shown that correctness is achievable. Thus, we should  demand correctness from our samplers.

\acks{The authors would like to thank Andrew Foong, Samuel Power, and Austin Tripp for helpful discussions and comments on an earlier draft. AGA was supported by a UK Engineering and Physical Sciences Research Council studentship [1950008]. VF was supported by a PhD fellowship from the Swiss Data Science Center.}

\bibliography{references}

\newpage
\appendix

\counterwithin{figure}{section}
\counterwithin{table}{section}

\section{Acceptance probability calculation for GGMC}\label{apd:obabo-accept}
The procedure used by \citet{bussi-parrinello} is to start with the joint
distribution of the noises $\vepsilon,\vepsilon'$ and transform it to the
distribution over $\vtheta_{n+1},\vm_{n+1} \vbar \vtheta_{n},\vm_{n}$ using the
density transformation formula with the determinant of the Jacobian.

However, in this case, there is somewhat more illustrative way. Consider
\begin{equation}
  \vr = \sqrt{(1-a)T} \mM^{-1/2} \vepsilon,\quad\text{ and thus }\quad \vr\sim \Normal{{\boldsymbol 0}, (1-a)T\mM},
\end{equation}
and similarly for $\vr'$ and $\vepsilon'$. Note that $\vr = \vm_{n+1/4} - \sqrt{a}\vm_{n}$, and $\vr' = \vm_{n+1} - \sqrt{a}\vm_{n+3/4}$.
By substituting $\vr,\vr'$ into \cref{eq:B.1,eq:B.2}, we can write $\vtheta_{n+1},\vm_{n+1}$ as
\begin{align}
  \vtheta_{n+1} &= \vtheta_n + h\mM^{-1} \bra{\sqrt{a} \vm_{n} - \frac{h}{2}\gradat{_{n}} + \vr}, \\
\vm_{n+1} &=
  a \vm_{n} - \frac{\sqrt{a}h}{2}\bra{\gradat{_{n}} + \gradat{_{n+1}}} + \sqrt{a}\vr + \vr'.
\end{align}

The reciprocal of the determinant of the Jacobian is thus
\begin{equation}
\det{\begin{matrix}
{\dd\vtheta_{n+1}}/{\dd\vr} & {\dd\vtheta_{n+1}}/{\dd\vr'} \\
{\dd\vm_{n+1}}/{\dd\vr} & {\dd\vm_{n+1}}/{\dd\vr'}
\end{matrix}}^{-1}
= \det{\begin{matrix} h\vM^{-1} & 0 \\ \sqrt{a} & \mI \end{matrix}}^{{-1}} = h^{-1}\det{\vM},
\end{equation}
which is a constant, so it will be eliminated when dividing by the backward transition density.

Putting this all together, the forward transition density is
\begin{equation}
  \begin{aligned}
    \Ptrans{\vtheta_{n+1},\vm_{n+1}}{\vtheta_{n},\vm_{n}} = h^{-1}\det{\vM} \,
    &\Normal{\vm_{n+1/4} - \sqrt{a}\vm_{n} \mvbar {\boldsymbol 0}, (1-a)T\mM}\\
    &\Normal{\vm_{n+1} - \sqrt{a}\vm_{n+3/4} \mvbar {\boldsymbol 0}, (1-a)T\mM}.
  \end{aligned}
\end{equation}

By substituting  $\vtheta_{n+1}\leftrightarrow \vtheta_{n}$,
$\vm_{n}\rightarrow -\vm_{n+1}$,
$\vm_{n+1/4}\rightarrow -\vm_{n+3/4}$,
$\vm_{n+3/4}\rightarrow -\vm_{n+1/4}$,
$\vm_{n+1}\rightarrow -\vm_{n}$,
we can obtain the backward transition probability
\begin{equation}
  \begin{aligned}
    \Ptrans{\vtheta_{n},-\vm_{n}}{\vtheta_{n+1},-\vm_{n+1}} = h^{-1}\det{\vM} \,
    &\Normal{-\vm_{n+3/4} + \sqrt{a}\vm_{n+1} \mvbar {\boldsymbol 0}, (1-a)T\mM} \\
    &\Normal{-\vm_{n} + \sqrt{a}\vm_{n+1/4} \mvbar {\boldsymbol 0}, (1-a)T\mM}.
  \end{aligned}
\end{equation}

Dividing them, the constant terms from the Gaussians and $h^{-1}\det{\vM}$ disappear, and we obtain
\begin{equation}
  \begin{aligned}
\log  & \frac{\Ptrans{\vtheta_{n},-\vm_{n}}{\vtheta_{n+1},-\vm_{n+1}}}{\Ptrans{\vtheta_{n+1},\vm_{n+1}}{\vtheta_{n},\vm_{n}}}
= \\
& -\frac{1}{(1-a) T} \bigg(
 K\bra{-\vm_{n+3/4} + \sqrt{a}\vm_{n+1}} + K\bra{-\vm_{n} + \sqrt{a}\vm_{n+1/4}} \\
& \hspace{2.5cm} - K\bra{ \vm_{n+1/4} - \sqrt{a}\vm_{n}  } - K\bra{ \vm_{n+1} - \sqrt{a}\vm_{n+3/4}  }
\bigg)
  \end{aligned}
\end{equation}
where $K\bra{\vv} = \frac{1}{2}\vv^{\tp}\mM^{{-1}}\vv$ is the kinetic energy defined in \cref{sec:zero_probs}. This simplifies greatly, to
\begin{equation}
  \log  \frac{\Ptrans{\vtheta_{n},-\vm_{n}}{\vtheta_{n+1},-\vm_{n+1}}}{\Ptrans{\vtheta_{n+1},\vm_{n+1}}{\vtheta_{n},\vm_{n}}}
  =-\frac{1}{T} \bra{ K\bra{\vm_{n}} - K\bra{\vm_{n+1}} + K\bra{\vm_{n+3/4}} - K\bra{\vm_{n+1/4}} }.
  \label{eq:forward-backward-log-ratio}
\end{equation}
The log ratio of the Boltzmann densities (\cref{eq:boltzmann}) is
\begin{equation}
  \begin{aligned}
  \log \frac{\Pboltzmann{\vtheta_{n+1}, \vm_{n+1}}}{\Pboltzmann{\vtheta_{n}, \vm_{n}}} &= -\frac{1}{T}\bra{U(\vtheta_{n+1}) - U(\vtheta_{n}) + K(\vm_{n+1}) - K(\vm_{n})}.
  \label{eq:boltzmann-log-ratio}
  \end{aligned}
\end{equation}
Now, to obtain the acceptance probability in \cref{eq:metropolis-hastings} we add eqs.~(\ref{eq:forward-backward-log-ratio})~and~(\ref{eq:boltzmann-log-ratio}), obtaining
\begin{equation}
  \tag{\ref{eq:ggmc-accept}}
  \log \mhaccept{\vtheta_{n+1}, \vm_{n+1}}{\vtheta_{n}, \vm_{n}} = -\frac{1}{T}\bra{U(\vtheta_{n+1}) - U(\vtheta_{n}) + K(\vm_{n+3/4}) - K(\vm_{n+1/4})}
\end{equation}
which is exactly the same as \cref{eq:ggmc-accept}.

\section{Relationship between GGMC and Gradient Descent parameters}\label{apd:parameter-relationship}

Following \citep{wenzel20posterior}, we found it useful to express the
parameters of the GGMC sampler (momentum permanence $a$ and time step $h$) in terms of the parameters used most commonly in gradient descent deep learning (learning rate $\ell$ and momentum $\beta$). This way, we can use parameters that have been found to work well by the deep learning community.

Gradient descent first updates the momentum, and then updates the parameters; in a time-asymmetric way, unlike GGMC. Thus, to obtain the parameter equivalences, we must start from the \ref{eq:O.2}~step, combine it with the \ref{eq:O.1}~step of the next iteration, and end at the \ref{eq:B.2} step. The equivalences then are as follows, where $N$ is the number of data points in the training set,
\begin{align}
  \beta &= a = e^{-\gamma h} &  \gamma &= - \sqrt{\frac{N}{\ell}} \log \beta  \\
  \ell &= N \, h^{2} & h &= \sqrt{\frac{\ell}{N}}. \\
\intertext{The equivalence between $\beta$ and $\gamma$ is different for the
  symplectic Euler discretization. In that case it is}
  \beta_{\text{SE}} &= 1 - \gamma h & \gamma &= (1 - \beta_{SE})\sqrt{\frac{N}{\ell}}.
\end{align}

\section{Multiple steps in symmetrically inhomogeneous Markov chains}\label{apd:multi_step}

We can extend the proof for the correctness of deferred acceptance probabilities \citep[Lemma~2]{amagold} to inhomogeneous Markov chains
by considering the following scheme. The scheme will force the inhomogeneity to be symmetric in time.

We augment the state of the Markov chain with $N$ random variables, $\{\zeta_i\}_{i=1}^N$, that will determine the randomness of the intermediate steps. Their target distributions have to be time-symmetric, i.e. $p_{\zeta_i}(\cdot) = p_{\zeta_{N-i+1}}(\cdot)$ for all $i$. Before an MCMC multi-step, we sample all the $\zeta_i$ from their target distribution. Then, we use them to provide all the randomness for the intermediate steps, so each step is just the result of a deterministic transformation:
\begin{equation}
    \vs_{n+i} = f_i(\vs_{n+i-1}; \zeta_1, \dots, \zeta_i),\text{ for all }i=1,\dots,N.
\end{equation}
The functions $f_i$ have to be time-symmetric as well ($f_i = f_{N-i+1}$). Otherwise, the backward transition is not realizable using the same $\zeta_i$ as the forward one, so the acceptance probability is zero. If they are time-symmetric, the resulting acceptance probability is \cref{eq:multi-step-mh}.

In this view, the cosine schedule \citep{cosine-schedule} has acceptance probability zero, even if a backward-realizable integrator is used. The cosine schedule is not a time-symmetric transformation: the schedule of learning rates is high at the beginning and low at the end of a cycle.

\citet{wu20normalizing} employ a similar construction to use MCMC samplers as normalizing flows.
\citet[Section~3.5.1]{neal1996bayesian} proposed the ``partial gradient'' construction: a variant of this for HMC and dataset minibatching in particular, motivated by the same operator-splitting view of integrators as the leapfrog.



\section{Experimental details}\label{apd:experiments}
We use the ResNet-20 architecture from \citep{wenzel20posterior} on the CIFAR-10 data set with data augmentation. We vary the learning rate as $\lbrace 10^{-6}, 10^{-4}, 10^{-3}, 10^{-2}, 10^{-1} \rbrace$, discarding the last one because of its unstable results. Training runs for 3000 epochs using stochastic gradients with minibatch size 125 (which is divisible by 50000, the number of data points). Every 10 epochs (a cycle) we evaluate the acceptance probability exactly and store a sample. \cref{fig:cifar10} shows one such cycle, and \cref{fig:cifar10-full} shows the whole training runs. We do not apply a Metropolis-Hastings correction, but merely examine it to determine the correctness of the sampler.

\begin{figure}[ht]
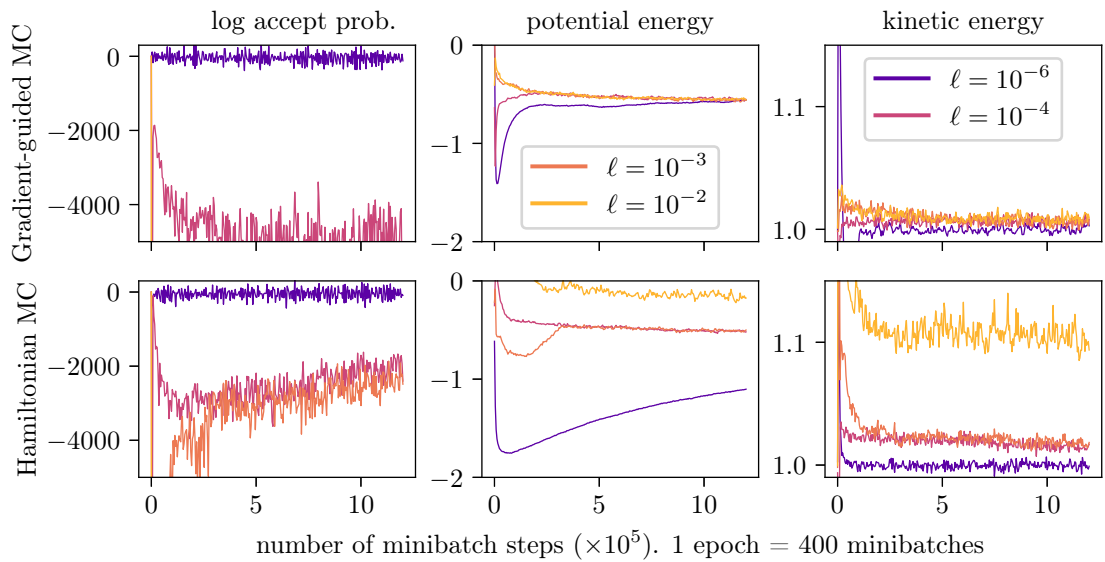

  \centering
  \inputpgf{figures/}{accept_prob_googleresnet_cifar10_full.pgf}
  \vspace{-1.2ex}
  \caption{Comparison of different learning rates
    for GGMC and HMC applied to a ResNet-20 in the CIFAR-10 data set; for 3000 epochs. Samples shown are taken every 10 epochs. Quantities as displayed are averaged over the full CIFAR-10 data set, but a random data augmentation transformation is applied to each example. \label{fig:cifar10-full}}
\end{figure}

We implemented the experiments in PyTorch \citep{pytorch} for GPU acceleration and gradients, using Jug \citep{jug} for job scheduling.

\end{document}